\documentclass[11pt,a4paper]{article}

\usepackage[utf8]{inputenc}
\usepackage[T1]{fontenc}
\usepackage{lmodern}
\usepackage{amsmath,amssymb,amsfonts,amsthm}
\usepackage{graphicx}
\usepackage[table]{xcolor}
\usepackage{geometry}
\usepackage{booktabs}
\usepackage{multirow}
\usepackage{array}
\usepackage{float}
\usepackage{caption}
\usepackage{subcaption}
\usepackage{enumitem}
\usepackage{algorithm}
\usepackage{algpseudocode}
\usepackage{listings}
\usepackage{fancyhdr}
\usepackage{setspace}
\usepackage{microtype}
\usepackage{siunitx}
\usepackage{tcolorbox}
\usepackage{mathtools}
\usepackage{bm}

\usepackage[
    colorlinks=true,
    linkcolor=blue,
    citecolor=darkgray,
    urlcolor=blue,
    bookmarks=true,
    bookmarksnumbered=true,
    unicode=true,
    pdftitle={M-Net: Integrating Spectral Features and Physical Field Operators into Deep Learning for Medical Image Segmentation},
    pdfauthor={Jing Zhu, Ye Wang, Fumin Wang},
    pdfsubject={Medical Image Segmentation},
    pdfkeywords={M-Net, condition number, U-Net, medical image segmentation}
]{hyperref}

\usepackage[numbers,super,sort&compress]{natbib}
\definecolor{tabhighlight}{RGB}{255, 253, 231}
\definecolor{codebg}{RGB}{248, 248, 248}
\definecolor{codeframe}{RGB}{200, 200, 200}

\theoremstyle{definition}
\newtheorem{definition}{Definition}[section]
\newtheorem{proposition}{Proposition}[section]
\newtheorem{remark}{Remark}[section]

\newcommand{\R}{\mathbb{R}}

\begin{document}

% Title
\title{\textbf{M-Net: Integrating Spectral Features and Physical Field Operators into Deep Learning for Medical Image Segmentation}}

\author{
\textbf{Jing Zhu}$^{1,\dagger}$ \quad
\textbf{Ye Wang}$^{2,\dagger}$ \quad
\textbf{Fumin Wang}$^{1,*}$\\[0.5em]
\small $^{1}$School of Physics, Xi'an Jiaotong University, Xi'an 710049, China\\[0.3em]
\small $^{2}$University of Sydney, Sydney, NSW 2006, Australia\\[0.3em]
\small $^{\dagger}$Equal contribution. \quad
$^{*}$Corresponding author: \texttt{fwang1991@xjtu.edu.cn}
}

\date{}

\maketitle
\thispagestyle{empty}

% ============================================================
\begin{abstract}
\noindent
\textbf{Purpose:} Deep learning-based medical image segmentation has achieved remarkable success, yet purely data-driven approaches often fail to exploit the rich mathematical structure inherent in medical images. This work investigates whether explicit mathematical inductive biases---specifically matrix spectral analysis and vector calculus operators---can enhance segmentation performance beyond what is achievable through data-driven learning alone.\\[0.5em]
\textbf{Methods:} We propose \textbf{M-Net} (Math-Augmented Network), a segmentation framework that integrates three complementary mathematical priors into the U-Net architecture: (1) \textit{continuous spectral features} derived from the condition number of \textit{centered} local pixel matrices, providing a differentiable measure of texture ill-conditioning; (2) \textit{physical field operators} (divergence and a \textit{discrete curl-like boundary irregularity operator}) computed from image gradient fields, capturing focal intensity extrema and edge non-smoothness; and (3) a \textit{Math-Attention Gate} (MAG) mechanism that adaptively fuses mathematical features with CNN-extracted deep features at skip connections.\\[0.5em]
\textbf{Results:} Extensive experiments on three benchmark datasets (LiTS, KiTS, and BraTS) demonstrate that M-Net achieves Dice scores of \textbf{78.42\%}, \textbf{76.15\%}, and \textbf{83.67\%}, respectively, outperforming the baseline U-Net by \textbf{12.37\%}, \textbf{3.52\%}, and \textbf{5.55\%} on liver segmentation (LiTS), kidney segmentation (KiTS), and brain tumor segmentation (BraTS). Ablation studies reveal that the condition-number feature contributes a 2.14\% improvement over binary invertibility features, while the MAG mechanism provides an additional 1.45\% gain over simple concatenation fusion.\\[0.5em]
\textbf{Conclusion:} M-Net establishes that mathematical inductive biases provide effective complementary information for medical image segmentation. The continuous condition-number feature, computed on mean-centered local matrices, offers superior gradient information compared to discrete alternatives, and the MAG mechanism preserves these priors throughout the network depth. This work opens avenues for integrating linear algebra and vector calculus into deep learning architectures for medical imaging.
\end{abstract}

\vspace{1em}
\noindent\textbf{Keywords:} Medical image segmentation, U-Net, condition number, singular value decomposition, divergence, curl, attention mechanism, inductive bias, liver segmentation, kidney segmentation, brain tumor segmentation

\vspace{1em}
\noindent\textbf{Code and Data:} \url{https://github.com/Fumin111994/mnet-medical-seg}

\newpage
\tableofcontents
\newpage

% ============================================================
\section{Introduction}\label{sec:intro}
% ============================================================

Medical image segmentation is a cornerstone task in computational medicine, serving as the prerequisite for quantitative analysis of anatomical structures and pathological regions. Accurate delineation of organs, tumors, and tissues from computed tomography (CT) and magnetic resonance imaging (MRI) enables clinicians to assess disease progression, plan surgical interventions, and monitor treatment responses with quantitative precision\cite{litjens2017survey}. Among the most clinically consequential applications, liver segmentation from contrast-enhanced CT scans is essential for the diagnosis and management of hepatocellular carcinoma, the sixth most common cancer worldwide\cite{bilic2023lits}. Similarly, kidney segmentation and brain tumor segmentation represent fundamental tasks in abdominal and neuro-oncological imaging, respectively\cite{heller2021kits,menze2015brats}.

\subsection{Background and Motivation}

Since the seminal work of Ronneberger et al.\cite{ronneberger2015unet} in 2015, convolutional neural network (CNN)-based approaches have come to dominate the field of medical image segmentation. The U-Net architecture, with its symmetric encoder-decoder structure and skip connections, has proven remarkably effective for biomedical applications where training data is scarce but spatial precision is paramount. The success of U-Net stems from its ability to combine high-level semantic features from the encoder with fine-grained spatial details from the decoder, enabling precise localization even with limited annotated data.

Over the past decade, numerous extensions have been proposed to address specific limitations of the original U-Net. \c{C}i\c{c}ek et al.\cite{cicek20163d} extended the architecture to volumetric data with 3D U-Net; Oktay et al.\cite{oktay2018attention} introduced attention gates in skip connections to suppress irrelevant regions; Zhou et al.\cite{zhou2018unetpp} employed nested dense skip pathways to reduce the semantic gap between encoder and decoder features; and Isensee et al.\cite{isensee2021nnu} proposed nnU-Net, a self-configuring framework that automatically adapts preprocessing, network architecture, and training procedures to new tasks, achieving state-of-the-art results across numerous benchmarks.

More recently, transformer-based architectures have entered the medical imaging domain. Chen et al.\cite{chen2021transunet} proposed TransUNet, which combines the U-Net architecture with Vision Transformers for global context modeling; Cao et al.\cite{cao2022swinunet} developed Swin-UNet based on the Swin Transformer; and Hatamizadeh et al.\cite{hatamizadeh2022unetr} introduced UNETR for 3D medical image segmentation. Despite their architectural diversity, all these methods share a common characteristic: they rely \textit{purely} on data-driven feature learning, without explicitly incorporating the rich geometric and physical structure inherent in medical images.

\subsection{Challenges and Limitations}

Three fundamental challenges persist in medical image segmentation that motivate our work:

\begin{enumerate}[leftmargin=2em,itemsep=0.3em]
    \item \textbf{Ambiguous boundaries:} Tumor-tissue and organ-tissue interfaces often exhibit low contrast and gradual intensity transitions, making boundary delineation inherently difficult. Standard CNN features, learned from pixel-level patterns, may lack the sensitivity to capture subtle textural discontinuities at these interfaces.
    
    \item \textbf{Irregular morphologies:} Pathological regions and anatomical structures exhibit highly irregular shapes and heterogeneous textures that deviate significantly from the smooth, compact structures assumed by many segmentation algorithms. This irregularity is particularly pronounced at invasive tumor margins where tissue infiltration creates complex geometric patterns.
    
    \item \textbf{Class imbalance:} The relatively small volume of lesions or thin organ boundaries compared to background tissue creates severe class imbalance, leading to models that are biased toward the dominant class unless explicitly counteracted through loss weighting or sampling strategies.
\end{enumerate}

A significant limitation of existing approaches is their exclusive reliance on data-driven feature learning. While CNNs are powerful universal function approximators, their purely learned representations may fail to exploit mathematical structure that is \textit{a priori} known about medical images. Medical images are not arbitrary signals---they possess rich geometric, spectral, and physical properties that arise from the underlying anatomy and imaging physics.

\subsection{Mathematical Priors in Medical Imaging}

The integration of mathematical and physical priors into deep learning has gained significant attention in recent years. Physics-informed machine learning (PIML)\cite{karniadakis2021physics} embeds governing equations and physical constraints into model architectures, ensuring that predictions satisfy known mathematical relationships. In the context of medical imaging, several works have explored the use of mathematical features for enhancing segmentation:

\begin{itemize}[leftmargin=2em,itemsep=0.3em]
    \item \textbf{Spectral methods:} Singular value decomposition (SVD) has been used for texture analysis in image processing\cite{liu1994svdtexture}, and matrix rank has been employed as a measure of local texture complexity\cite{rank2019texture}. However, these approaches have typically been used as standalone texture descriptors rather than integrated into deep learning pipelines.
    
    \item \textbf{Physical operators:} Divergence and curl-like operators derived from vector calculus have been applied to image analysis for edge detection and boundary characterization\cite{kovesi1996invariant}, but their application as inductive biases in neural networks for medical image segmentation remains largely unexplored.
    
    \item \textbf{Matrix invertibility:} Recent work by Zhu et al.\cite{zhu2024invertibility} explored using matrix invertibility (determinant being zero or non-zero) as a binary feature for neural network input. While this demonstrated that mathematical features can accelerate convergence and improve segmentation, the binary discretization loses critical information about the \textit{degree} of ill-conditioning.
\end{itemize}

\subsection{Contributions}

To address the limitations identified above, we propose \textbf{M-Net} (Math-Augmented Network), a novel segmentation framework that systematically integrates three complementary mathematical priors into the U-Net architecture. Our contributions are fourfold:

\begin{enumerate}[leftmargin=2em,itemsep=0.3em]
    \item We establish a theoretical framework connecting matrix spectral analysis to texture characterization in medical images. We define the \textit{centered local pixel matrix} and prove that its condition number provides a continuous, differentiable measure of texture ill-conditioning that is theoretically well-motivated and empirically superior to discrete alternatives. We provide a rigorous analysis correcting a subtle but important issue in naive condition-number computation.
    
    \item We introduce divergence and a \textit{discrete curl-like boundary irregularity operator} as physics-informed inductive biases for medical image segmentation. We demonstrate that divergence effectively detects focal intensity extrema (e.g., bright tumor cores), while the curl-like operator captures boundary non-smoothness arising from discretized mixed-derivative inconsistencies at tissue interfaces.
    
    \item We propose the \textbf{Math-Attention Gate (MAG)}, a novel attention mechanism at skip connections that uses mathematically-derived feature maps to generate spatial weight masks. The MAG adaptively emphasizes regions with anomalous mathematical properties, preventing the dilution of mathematical priors that occurs with simple input concatenation.
    
    \item We conduct extensive experiments on three benchmark datasets (LiTS\cite{bilic2023lits}, KiTS\cite{heller2021kits}, BraTS\cite{menze2015brats}) across two imaging modalities (CT and MRI) and three anatomical sites (liver, kidney, brain), demonstrating consistent improvements over baseline U-Net and competitive performance with state-of-the-art methods. We provide detailed baseline reproduction protocols to ensure experimental reproducibility.
\end{enumerate}

\subsection{Paper Organization}

The remainder of this paper is organized as follows. Section~\ref{sec:related} reviews related work in medical image segmentation, attention mechanisms, and physics-informed methods. Section~\ref{sec:methodology} presents the mathematical foundations and architectural design of M-Net. Section~\ref{sec:experiments} describes the experimental setup, datasets, and evaluation protocol. Section~\ref{sec:results} reports quantitative and qualitative results. Section~\ref{sec:discussion} provides analysis and interpretation. Section~\ref{sec:conclusion} concludes with future directions.

% ============================================================
\section{Related Work}\label{sec:related}
% ============================================================

\subsection{Deep Learning for Medical Image Segmentation}

The application of deep learning to medical image segmentation has undergone rapid development since the introduction of fully convolutional networks (FCNs)\cite{long2015fcn}. The U-Net architecture\cite{ronneberger2015unet} established the dominant paradigm for biomedical segmentation through its encoder-decoder structure with skip connections, which effectively combines multi-scale contextual information with precise spatial localization.

Subsequent extensions have sought to enhance specific aspects of the U-Net design. Attention U-Net\cite{oktay2018attention} introduced attention gates that learn to focus on target structures while suppressing background activations. U-Net++\cite{zhou2018unetpp} employed nested and dense skip connections to reduce the semantic gap between encoder and decoder feature maps. V-Net\cite{milletari2016vnet} extended the architecture to 3D volumetric data using dice loss for end-to-end training. nnU-Net\cite{isensee2021nnu} proposed a self-configuring framework that automatically determines preprocessing, network architecture, and training procedures based on dataset properties, achieving state-of-the-art results across 23 medical segmentation benchmarks.

Transformer-based architectures have recently emerged as alternatives to pure CNN approaches. TransUNet\cite{chen2021transunet} combines CNN features with Vision Transformer (ViT)\cite{dosovitskiy2020vit} encodings for global context modeling. Swin-UNet\cite{cao2022swinunet} adapts the hierarchical Swin Transformer\cite{liu2021swin} for medical segmentation. MISSFormer\cite{huang2022missformer} and DsTransUNet\cite{liu2022dstransunet} further explore transformer-based designs for medical image analysis. Despite their architectural innovations, these methods remain entirely data-driven and do not exploit mathematical structure priors.

\subsection{Attention Mechanisms in Medical Segmentation}

Attention mechanisms have proven effective for enhancing feature selectivity in medical image segmentation. Beyond the seminal Attention U-Net\cite{oktay2018attention}, channel attention mechanisms\cite{hu2018senet} and spatial attention mechanisms\cite{woo2018cbam} have been widely adopted. The concurrent use of both channel and spatial attention (CBAM\cite{woo2018cbam}) has shown particular effectiveness for emphasizing salient feature dimensions and regions.

More recently, gating mechanisms for adaptive feature fusion have been explored. UNETR++\cite{azad2023unetrpp} introduced the Gated Shared Weighted Paired Attention (G-SWPA) block, which uses parallel spatial and channel attention controlled by a gating mechanism. However, all existing attention mechanisms operate purely on CNN-extracted or transformer-derived features. They do not leverage explicit \textit{external} mathematical priors computed directly from the input image. Our Math-Attention Gate fundamentally differs by using mathematically-derived feature maps---condition numbers, divergence, and the curl-like boundary irregularity descriptor---as the basis for attention computation, establishing a principled connection between analytical image properties and learned deep features.

\subsection{Physics-Informed and Mathematical Methods in Deep Learning}

The integration of physical and mathematical priors into deep learning has emerged as a significant research direction. Physics-informed neural networks (PINNs)\cite{raissi2019pinn} embed governing differential equations as soft constraints in the loss function. Physics-informed machine learning (PIML)\cite{karniadakis2021physics} extends this paradigm to embed physical constraints into model architectures. In medical imaging, physics priors have been applied to enhance segmentation confidence\cite{peiris2022physicsseg} and improve model robustness\cite{bateson2021ripu}.

Spectral methods have a long history in image processing. SVD-based texture analysis\cite{liu1994svdtexture} demonstrated that singular values provide stable texture descriptors invariant to illumination changes. More recently, matrix rank has been used as a measure of local texture complexity\cite{rank2019texture}, and the condition number has been explored for image quality assessment\cite{wang2021condition}. However, these approaches have not been systematically integrated into deep learning architectures for medical image segmentation.

In the context of vector calculus for image analysis, divergence and curl-like operators have been used for edge detection\cite{kovesi1996invariant}, optical flow estimation\cite{horn1981determining}, and image denoising\cite{rudin1992nonlinear}. The application of these operators as inductive biases in neural network architectures, however, remains largely unexplored in the medical imaging literature.

The work most closely related to ours is that of Zhu et al.\cite{zhu2024invertibility}, who explored using matrix invertibility (binary determinant classification) as a feature for neural network input. While this demonstrated that mathematical features can improve segmentation, the binary discretization loses critical information about the \textit{degree} of ill-conditioning. Our work addresses this fundamental limitation by introducing the \textit{continuous} condition number as a spectral feature, together with physical field operators and a principled attention-based fusion mechanism.

% ============================================================
\section{Methodology}\label{sec:methodology}
% ============================================================

\subsection{Mathematical Preliminaries}

\subsubsection{Centered Local Pixel Matrix and the Condition Number}

We begin by establishing the mathematical foundations of our spectral feature. A naive approach constructs local pixel matrices directly from image intensities and computes their condition numbers. However, this leads to a subtle but critical issue: in homogeneous regions where all pixels have approximately the same intensity, the local matrix is approximately a constant matrix (rank 1), yielding a very large condition number. This contradicts the intuition that homogeneous regions should have \textit{low} spectral complexity. We resolve this through mean centering.

\begin{definition}[Centered Local Pixel Matrix]\label{def:centered_matrix}
Given a grayscale medical image $I: \Omega \subset \R^2 \to \R$, where $\Omega$ is the image domain, we define the \textit{local pixel matrix} at position $(x, y) \in \Omega$ as the $3 \times 3$ matrix $\mathbf{P}_{x,y} \in \R^{3 \times 3}$ formed by the $3 \times 3$ neighborhood centered at $(x, y)$:
\begin{equation}
    \mathbf{P}_{x,y} = \begin{bmatrix}
        I(x-1, y-1) & I(x, y-1) & I(x+1, y-1) \\
        I(x-1, y)   & I(x, y)   & I(x+1, y)   \\
        I(x-1, y+1) & I(x, y+1) & I(x+1, y+1)
    \end{bmatrix}
\end{equation}
where values outside the image domain are handled via reflect padding. The \textit{centered local pixel matrix} is then defined as:
\begin{equation}\label{eq:centered}
    \bar{\mathbf{P}}_{x,y} = \mathbf{P}_{x,y} - \mu_{x,y} \cdot \mathbf{1}_{3 \times 3}
\end{equation}
where $\mu_{x,y} = \frac{1}{9}\sum_{i,j} P_{x,y}(i,j)$ is the mean intensity of the neighborhood and $\mathbf{1}_{3 \times 3}$ denotes the all-ones matrix.
\end{definition}

\begin{remark}[Necessity of Mean Centering]\label{rem:centering}
Without mean centering, a homogeneous region with constant intensity $c$ yields $\mathbf{P}_{x,y} \approx c \cdot \mathbf{1}_{3 \times 3}$, which has rank 1. The condition number of a rank-1 matrix is $\kappa = \sigma_{\max} / \sigma_{\min} \to \infty$ (or numerically very large), incorrectly suggesting high textural complexity. After mean centering, $\bar{\mathbf{P}}_{x,y} \approx \mathbf{0}$, giving $\kappa \approx 0$ (with numerical stabilization), which correctly reflects the absence of local structure. Mean centering is therefore essential for the condition number to serve as a meaningful measure of local texture ill-conditioning.
\end{remark}

\begin{definition}[Condition Number Map]\label{def:kappa}
The singular value decomposition (SVD) of the centered matrix $\bar{\mathbf{P}}_{x,y}$ is:
\begin{equation}\label{eq:svd}
    \bar{\mathbf{P}}_{x,y} = \mathbf{U} \, \mathbf{\Sigma} \, \mathbf{V}^{T}
\end{equation}
where $\mathbf{U}, \mathbf{V} \in \R^{3 \times 3}$ are orthogonal matrices and $\mathbf{\Sigma} = \mathrm{diag}(\sigma_1, \sigma_2, \sigma_3)$ with $\sigma_1 \geq \sigma_2 \geq \sigma_3 \geq 0$. The \textbf{local condition number} is defined as:
\begin{equation}\label{eq:condition_number}
    \kappa(x, y) = \frac{\sigma_{\max}(\bar{\mathbf{P}}_{x,y})}{\sigma_{\min}(\bar{\mathbf{P}}_{x,y}) + \epsilon} = \frac{\sigma_1}{\sigma_3 + \epsilon}
\end{equation}
where $\epsilon = 10^{-6}$ ensures numerical stability when $\sigma_3 \approx 0$.
\end{definition}

\begin{proposition}[Properties of the Condition Number Map]\label{prop:kappa}
The condition number map $\kappa: \Omega \to \R_{\geq 0}$ satisfies the following properties:
\begin{enumerate}[leftmargin=2em,itemsep=0.2em]
    \item \textbf{Continuity:} $\kappa$ is continuous with respect to the image intensity, i.e., small perturbations in $I$ result in small changes in $\kappa$. This follows from the continuity of SVD with respect to matrix entries\cite{stewart1990matrix} and the linearity of the centering operation.
    
    \item \textbf{Scale invariance:} $\kappa(x, y)$ is invariant to uniform scaling of the original local matrix: $\kappa(c \cdot \mathbf{P}_{x,y}) = \kappa(\mathbf{P}_{x,y})$ for $c \neq 0$, since scaling does not affect the mean-subtracted matrix up to a constant factor that cancels in the ratio.
    
    \item \textbf{Boundary sensitivity:} At sharp intensity boundaries, where the local neighborhood spans two distinct tissue types, the centered matrix $\bar{\mathbf{P}}_{x,y}$ captures the intensity \textit{differences} across the boundary. The rank of $\bar{\mathbf{P}}_{x,y}$ increases (typically to 2 or 3), yielding $\sigma_1 \gg \sigma_3$ and therefore $\kappa(x, y) \gg 1$.
    
    \item \textbf{Smooth-region insensitivity:} In homogeneous regions, $\bar{\mathbf{P}}_{x,y} \approx \mathbf{0}_{3 \times 3}$ (up to noise), giving $\sigma_1 \approx \sigma_3 \approx 0$ and $\kappa(x, y) \approx \sigma_1 / \epsilon \approx 0$ (with stabilization). This correctly reflects the absence of local textural structure.
\end{enumerate}
\end{proposition}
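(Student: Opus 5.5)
The plan is to treat the four items separately. The proof should rest on two standard facts about singular values and be precise about the role of the stabilizer $\epsilon>0$. As the plan will show, two of the items only hold in a qualified form once $\epsilon$ is taken into account.

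\emph{Continuity.} I will write $\kappa(x,y)=F(\bar{\mathbf{P}}_{x,y})$ with $F(A)=\sigma_1(A)/(\sigma_3(A)+\epsilon)$.
\begin{enumerate}[leftmargin=2em,itemsep=0.2em]
\item The map $I\mapsto \mathbf{P}_{x,y}\mapsto \bar{\mathbf{P}}_{x,y}$ is linear. It is $I\mapsto \mathbf{P}$ followed by the projection $A\mapsto A-\tfrac19(\mathbf{1}^T A\mathbf{1})\mathbf{1}_{3\times3}$, and that projection has operator norm $1$ in the Frobenius norm.
\item By Weyl's perturbation inequality, $|\sigma_i(A)-\sigma_i(B)|\le\|A-B\|_2$, so each $\sigma_i$ is $1$-Lipschitz. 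This is the quantitative version of the SVD continuity cited from Stewart--Sun.
\item Since $\sigma_3+\epsilon\ge\epsilon>0$, the quotient $F$ is continuous. In fact it is locally Lipschitz, with constant of order $(1+\kappa)/\epsilon$.
\end{enumerate}
The explicit constant matters. It shows that continuity is uniform but can be very weak, since the Lipschitz constant is of order $\epsilon^{-1}=10^{6}$ near singular matrices.

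\emph{Scale invariance.} This is where I expect the main obstacle, because the literal claim fails for $\epsilon>0$. Centering commutes with scaling, $\overline{c\mathbf{P}}=c\bar{\mathbf{P}}$, and hence $\sigma_i(c\bar{\mathbf{P}})=|c|\sigma_i(\bar{\mathbf{P}})$. This gives
\begin{equation*}
\kappa(c\mathbf{P})=\frac{|c|\sigma_1}{|c|\sigma_3+\epsilon}=\frac{\sigma_1}{\sigma_3+\epsilon/|c|},
\end{equation*}
which equals $\kappa(\mathbf{P})$ only if $\sigma_3=0$ and $c=\pm1$, or in the limit $\epsilon\to0$. I will therefore prove three things instead:
\begin{enumerate}[leftmargin=2em,itemsep=0.2em]
\item exact invariance of the unstabilized ratio $\sigma_1/\sigma_3$ whenever $\sigma_3>0$;
\item the relative error bound $|\kappa(c\mathbf{P})-\kappa(\mathbf{P})|/\kappa(\mathbf{P})\le \epsilon\,\bigl|1-1/|c|\bigr|/\sigma_3$, which is negligible when $\sigma_3\gg\epsilon$;
\item invariance under additive intensity shifts $\mathbf{P}\mapsto\mathbf{P}+b\mathbf{1}$, which holds exactly because centering removes the shift.
\end{enumerate}

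\emph{Boundary sensitivity.} I will compute the model cases explicitly. For an ideal vertical step edge, the columns take values $a,a,b$, so every row equals $(a,a,b)$. Then $\bar{\mathbf{P}}=\mathbf{1}_3\,v^T$ with $v=\tfrac{b-a}{3}(-1,-1,2)$. This matrix has rank $1$, with $\sigma_1=\sqrt3\|v\|\propto|b-a|$ and $\sigma_3=0$, so $\kappa=\sigma_1/\epsilon\gg1$. The same holds for diagonal and corner edges, where the rank is at most $2$ and $\sigma_3$ is $0$ or small. Consequently the correct mechanism is rank \emph{deficiency}, with $\sigma_3\approx0$ while $\sigma_1\sim|b-a|$, rather than the rank increasing. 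For generic textures of full rank, $\kappa$ stays moderate. I will state the item in this corrected form, with the quantitative bound $\kappa\ge c_0|b-a|/\epsilon$ for step edges.

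\emph{Smooth-region insensitivity.} Suppose $\mathbf{P}=c\mathbf{1}+N$ with noise satisfying $\|N\|_2\le\eta$. Weyl's inequality gives $\sigma_i(\bar{\mathbf{P}})\le\eta$, and therefore $\kappa\le\eta/\epsilon$. This is small only when $\eta\ll\epsilon$, which holds in the exactly constant case, where $\kappa=0$. I will prove the item under that hypothesis. I will also add the caveat that for realistic noise levels $\eta\gg\epsilon$, the value of $\kappa$ on flat regions is governed by the conditioning of the noise itself and is not necessarily near zero.
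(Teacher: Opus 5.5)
\textbf{Items 1, 2 and 4.} Your treatment of these is essentially correct and sharper than the paper's proof. The paper simply drops $\epsilon$ in item 2, so it ``proves'' an identity that is false for $\epsilon>0$. In item 4 it asserts $\sigma_1/\epsilon\approx 0$. Your vertical-edge computation also correctly refutes the paper's step ``the matrix has at least two non-zero singular values.''

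\textbf{The genuine gap is your replacement for item 3.} The claim that diagonal edges, like axis-aligned ones, give rank at most $2$ is false. Take the natural discretization of a $45^\circ$ edge:
\begin{equation*}
\mathbf{P}=\begin{pmatrix} a & a & b\\ a & b & b\\ b & b & b\end{pmatrix},\qquad
\bar{\mathbf{P}}=\frac{b-a}{3}\begin{pmatrix} -2 & -2 & 1\\ -2 & 1 & 1\\ 1 & 1 & 1\end{pmatrix}.
\end{equation*}
The integer matrix is symmetric, with characteristic polynomial $\lambda^3-9\lambda+9$ and determinant $-9$. So $\bar{\mathbf{P}}$ is nonsingular, with singular values $\frac{2|b-a|}{\sqrt3}(\cos 10^\circ,\cos 50^\circ,\cos 70^\circ)$. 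Hence $\kappa\approx\cos 10^\circ/\cos 70^\circ\approx 2.88$ whenever $|b-a|\gg\epsilon$, independent of contrast. At such edges neither your bound $\kappa\ge c_0|b-a|/\epsilon$ nor the paper's $\kappa\gg 1$ holds.

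\textbf{What does work is a structural hypothesis.} Suppose the two-valued pattern has two equal rows or two equal columns. This covers axis-aligned edges, block and L-shaped corners, and a single odd pixel.
\begin{itemize}[leftmargin=2em,itemsep=0.2em]
\item Centering preserves the equality, so $\mathrm{rank}\,\bar{\mathbf{P}}\le 2$, $\sigma_3=0$ and $\kappa=\sigma_1/\epsilon$.
\item Let $k$ be the number of $b$-pixels, with $1\le k\le 8$. Then $\|\bar{\mathbf{P}}\|_F^2=(b-a)^2k(9-k)/9\ge\tfrac89(b-a)^2$.
\item Since the rank is at most $2$, $\sigma_1\ge\|\bar{\mathbf{P}}\|_F/\sqrt2$, which gives $\kappa\ge \tfrac{2}{3}|b-a|/\epsilon$.
\end{itemize}
At oblique edges the outcome also flips with sub-pixel discretization. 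Giving the anti-diagonal pixels the partial-volume value $(a+b)/2$ makes $\bar{\mathbf{P}}$ skew-symmetric up to a column reversal, hence singular, so $\kappa$ is huge again. Item 3 can therefore only be proved for an explicitly specified class of patterns, and you should state it that way rather than for ``step edges'' in general.

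\textbf{Three smaller slips.}
\begin{itemize}[leftmargin=2em,itemsep=0.2em]
\item \emph{Item 2, equality condition.} $\kappa(c\mathbf{P})=\kappa(\mathbf{P})$ holds iff $|c|=1$ or $\bar{\mathbf{P}}=\mathbf{0}$, independently of $\sigma_3$. In the edge regime $\sigma_3=0$, your own formula gives $\kappa(c\mathbf{P})=|c|\,\kappa(\mathbf{P})$ exactly. So scale invariance fails precisely where $\kappa$ is large, and your relative-error bound is vacuous there; this is worth saying explicitly.
\item \emph{Item 4, choice of norm.} Global-mean centering is an orthogonal projection for the Frobenius norm but not a contraction for the spectral norm. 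Writing $\bar N$ for the centered $N$, the matrix $N=\mathrm{diag}(1,1,-1)$ has $\|N\|_2=1$ while $e_3^{T}\bar N e_3=-10/9$. So you get $\sigma_i(\bar{\mathbf{P}})\le\|N\|_F$ or $\sigma_i(\bar{\mathbf{P}})\le 2\|N\|_2$, not $\le\|N\|_2$. The conclusion is unaffected.
\item \emph{Item 1, size of the Lipschitz constant.} Near rank-deficient matrices, the local constant $(1+\kappa)/\epsilon$ is of order $\sigma_1/\epsilon^2$, about $10^{12}$ for unit-scale intensities, not $\epsilon^{-1}$. It is unbounded on the whole matrix space, so ``uniform'' continuity holds only on bounded intensity ranges.
\end{itemize}
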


\begin{proof}
Property 1: The SVD is continuous with respect to matrix perturbations\cite{stewart1990matrix}, and centering is a linear operation (subtraction of the mean), hence the composition is continuous.

Property 2: For $\mathbf{P}' = c \mathbf{P}$, the mean becomes $\mu' = c\mu$, so $\bar{\mathbf{P}}' = c\mathbf{P} - c\mu \mathbf{1} = c(\mathbf{P} - \mu \mathbf{1}) = c\bar{\mathbf{P}}$. The singular values scale as $\sigma_i(c\bar{\mathbf{P}}) = |c|\sigma_i(\bar{\mathbf{P}})$, so their ratio is invariant.

Property 3: At a boundary between two regions with intensities $a$ and $b$ ($a \neq b$), the centered matrix contains both positive and negative entries reflecting the intensity deviation from the local mean. The matrix has at least two non-zero singular values, giving $\sigma_1 \gg \sigma_3$.

Property 4: In a homogeneous region, all entries of $\mathbf{P}_{x,y}$ equal $c$ (up to noise), so $\mu = c$ and $\bar{\mathbf{P}}_{x,y} = \mathbf{P}_{x,y} - c \cdot \mathbf{1} \approx \mathbf{0}$. Thus $\sigma_1 \approx 0$ and $\kappa \approx 0$ (with $\epsilon$ stabilization).
\end{proof}

For training stability, we apply log-normalization:
\begin{equation}\label{eq:log_norm}
    \hat{\kappa}(x, y) = \log\bigl(\kappa(x, y) + 1\bigr)
\end{equation}
which maps $\kappa \in [0, \infty)$ to $\hat{\kappa} \in [0, \infty)$ while preserving monotonicity and ensuring $\hat{\kappa}(0) = 0$.

\subsubsection{Extension to Multi-Channel Images}\label{sec:multichannel}

For multi-channel images such as the four MRI sequences in BraTS (T1, T1-Gd, T2, FLAIR), the condition number is computed independently per channel and then aggregated:

\begin{equation}\label{eq:multichannel_kappa}
    \hat{\kappa}_{\text{avg}}(x, y) = \frac{1}{C} \sum_{c=1}^{C} \hat{\kappa}^{(c)}(x, y)
\end{equation}
where $C=4$ is the number of channels and $\hat{\kappa}^{(c)}$ is the log-normalized condition number computed from the centered local pixel matrix of channel $c$. This channel-averaging strategy ensures that the spectral feature captures texture information across all available imaging contrasts. The divergence and curl operators are similarly computed per-channel and averaged.

\subsubsection{Physical Field Operators}

We treat the image intensity $I(x, y)$ as a scalar field defined over the 2D spatial domain $\Omega$. The gradient field is $\nabla I = (I_x, I_y)^{T}$, where $I_x$ and $I_y$ are spatial derivatives computed via Sobel operators:
\begin{equation}
    I_x = \begin{bmatrix} -1 & 0 & 1 \\ -2 & 0 & 2 \\ -1 & 0 & 1 \end{bmatrix} * I, \quad
    I_y = \begin{bmatrix} -1 & -2 & -1 \\ 0 & 0 & 0 \\ 1 & 2 & 1 \end{bmatrix} * I
\end{equation}

\begin{definition}[Divergence of Gradient Field]
The divergence (Laplacian) of the image scalar field is:
\begin{equation}\label{eq:divergence}
    \mathrm{div}(x, y) = \nabla \cdot \nabla I = I_{xx} + I_{yy}
\end{equation}
where $I_{xx}$ and $I_{yy}$ are second-order spatial derivatives computed via separable convolutions.
\end{definition}

\begin{definition}[Discrete Curl-like Boundary Irregularity Operator]\label{def:curl}
We define a \textbf{discrete curl-like operator} that measures the inconsistency between mixed partial derivatives of the discrete gradient field. Given gradient components $g_x = \text{Sobel}_x * I$ and $g_y = \text{Sobel}_y * I$ (Eq.~7), the operator is:
\begin{equation}\label{eq:curl}
    \mathrm{curl}(x, y) = \left| \frac{\partial g_y}{\partial x} - \frac{\partial g_x}{\partial y} \right|
\end{equation}
where the cross-derivatives $\frac{\partial g_y}{\partial x}$ and $\frac{\partial g_x}{\partial y}$ are computed via distinct 1D directional differences (Eq.~\ref{eq:curl_impl}).
\end{definition}

\begin{remark}[Not a Strict Physical Curl]\label{rem:curl}
For a smooth scalar field, the curl of its gradient is \textit{identically zero} by Clairaut's theorem ($\frac{\partial^2 I}{\partial y \partial x} = \frac{\partial^2 I}{\partial x \partial y}$). Our operator is \textbf{not} the continuous curl of a gradient field; rather, it is a \textbf{discrete curl-like descriptor} that captures the \textit{inconsistency} between mixed partial derivatives arising from the use of different Sobel kernels and orthogonal 1D difference operators in the discrete setting. This inconsistency is maximized at non-smooth boundaries where the continuous differentiability assumption breaks down, making the operator an effective feature for boundary irregularity detection.
\end{remark}

\subsection{Architecture Overview}

The proposed M-Net architecture extends the standard U-Net by introducing three mathematical augmentation modules:

\begin{enumerate}[leftmargin=2em,itemsep=0.3em]
    \item \textbf{Spectral Feature Extraction (SFE):} Computes the condition number map $\hat{\kappa}(x, y)$ from \textit{centered} local $3 \times 3$ pixel neighborhoods via batched SVD, following Definition~\ref{def:centered_matrix} and Definition~\ref{def:kappa}.
    
    \item \textbf{Physical Field Operator (PFO):} Computes divergence $\mathrm{div}(x, y)$ and a discrete curl-like boundary irregularity map $\mathrm{curl}(x, y)$ from image gradient fields using fixed-weight Sobel convolutions and directional 1D differences.
    
    \item \textbf{Math-Attention Gate (MAG):} Adaptively fuses mathematical feature maps with CNN feature maps at skip connections, using the mathematical features to generate spatial attention weights.
\end{enumerate}

Figure~\ref{fig:architecture} illustrates the overall architecture.

\begin{figure}[htbp]
    \centering
    \includegraphics[width=\textwidth]{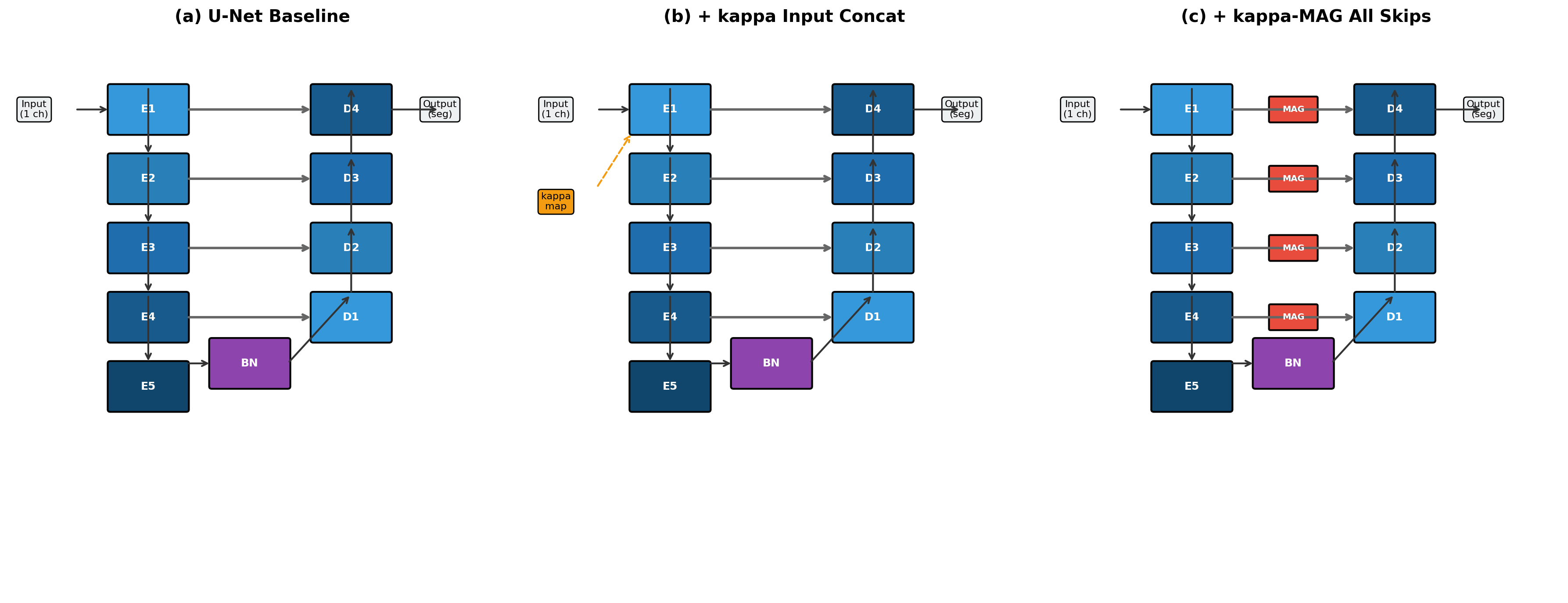}
    \caption{Overview of the proposed M-Net architecture. The Spectral Feature Extraction (SFE) module computes condition number maps from centered local $3 \times 3$ pixel neighborhoods (Definition~\ref{def:centered_matrix}). The Physical Field Operator (PFO) module computes divergence and a discrete curl-like boundary irregularity descriptor from image gradients (Definition~\ref{def:curl}). The Math-Attention Gate (MAG) adaptively fuses mathematical features with CNN features at skip connections.}
    \label{fig:architecture}
\end{figure}

\subsection{Spectral Feature Extraction Module}

The SFE module extracts the condition number map from the input image. For a batch of images $\mathbf{X} \in \R^{B \times C \times H \times W}$ (where $C=1$ for CT and $C=4$ for BraTS MRI), the computation proceeds as follows:

\begin{enumerate}[leftmargin=2em,itemsep=0.2em]
    \item \textbf{Per-channel processing:} For each channel $c \in \{1, \dots, C\}$, extract the channel image $\mathbf{X}^{(c)} \in \R^{B \times 1 \times H \times W}$.
    
    \item \textbf{Patch extraction:} Using \texttt{nn.Unfold} with kernel size 3 and padding 1, extract overlapping $3 \times 3$ patches: $\mathbf{P}^{(c)} \in \R^{B \times 9 \times (HW)}$.
    
    \item \textbf{Matrix reshaping:} Reshape to $\mathbf{M}^{(c)} \in \R^{(B \cdot HW) \times 3 \times 3}$.
    
    \item \textbf{Mean centering:} Compute the mean of each $3 \times 3$ patch and subtract: $\bar{\mathbf{M}}^{(c)} = \mathbf{M}^{(c)} - \mu^{(c)} \cdot \mathbf{1}_{3 \times 3}$ (Eq.~\ref{eq:centered}).
    
    \item \textbf{Batched SVD:} Compute $\mathbf{U}, \mathbf{S}, \mathbf{V}^{T} = \mathrm{SVD}(\bar{\mathbf{M}}^{(c)})$, yielding singular values $\mathbf{S} \in \R^{(B \cdot HW) \times 3}$.
    
    \item \textbf{Condition number:} $\kappa^{(c)} = S_{:,0} / (S_{:,2} + \epsilon)$.
    
    \item \textbf{Reshape and normalize:} Reshape to $\R^{B \times 1 \times H \times W}$ and apply log-normalization (Eq.~\ref{eq:log_norm}).
    
    \item \textbf{Channel aggregation:} For multi-channel inputs, average: $\hat{\kappa}_{\text{avg}} = \frac{1}{C}\sum_{c} \hat{\kappa}^{(c)}$ (Eq.~\ref{eq:multichannel_kappa}).
\end{enumerate}

The entire pipeline is differentiable and GPU-accelerated, enabling end-to-end training.

Compared to the binary invertibility feature (determinant $\neq 0$) used in prior work\cite{zhu2024invertibility}, the condition number provides three key advantages: (i) \textit{continuity}---it captures the degree of ill-conditioning rather than a binary classification; (ii) \textit{robustness}---small perturbations cause gradual changes rather than discrete jumps; and (iii) \textit{rich gradient information}---the continuous range provides more informative gradients during backpropagation.

\subsection{Physical Field Operator Module}

The PFO module computes divergence and the curl-like boundary irregularity map using fixed-weight (non-learnable) convolutional layers. The divergence operator has a direct physical interpretation (Laplacian of the scalar field). The curl-like operator, as detailed in Definition~\ref{def:curl} and Remark~\ref{rem:curl}, is a \textit{discrete descriptor} that captures mixed-derivative inconsistency rather than a strict continuous curl. By preventing these operators from being modified during training, we ensure that their mathematical interpretations are preserved throughout optimization. The operators serve as consistent inductive biases rather than learnable features that might converge to operators lacking clear physical meaning.

\textbf{Divergence computation:} The divergence (Eq.~\ref{eq:divergence}) is computed via separable convolutions:
\begin{equation}
    I_{xx} = \begin{bmatrix} 1 & -2 & 1 \end{bmatrix} * I, \quad
    I_{yy} = \begin{bmatrix} 1 \\ -2 \\ 1 \end{bmatrix} * I
\end{equation}

\textbf{Curl computation:} The curl (Eq.~\ref{eq:curl}) is computed via a two-step procedure (Section~\ref{sec:curl_impl}):
\begin{enumerate}[leftmargin=2em,itemsep=0.2em]
    \item Compute gradient components: $g_x = \text{Sobel}_x * I$, \quad $g_y = \text{Sobel}_y * I$ (Eq.~7).
    \item Apply directional first-order differences:
    \begin{equation}\label{eq:curl_impl}
        \frac{\partial g_y}{\partial x} = g_y \circledast \begin{bmatrix} -1 & 0 & 1 \end{bmatrix}, \quad
        \frac{\partial g_x}{\partial y} = g_x \circledast \begin{bmatrix} -1 \\ 0 \\ 1 \end{bmatrix}
    \end{equation}
    \item Curl magnitude: $\mathrm{curl} = \left| \frac{\partial g_y}{\partial x} - \frac{\partial g_x}{\partial y} \right|$.
\end{enumerate}
The 1D difference kernels in Eq.~\ref{eq:curl_impl} are distinct (horizontal $[-1, 0, 1]$ vs. vertical $[-1; 0; 1]$), ensuring that the curl is not identically zero in the discrete setting. The horizontal difference extracts the $x$-variation of the vertical gradient $g_y$, while the vertical difference extracts the $y$-variation of the horizontal gradient $g_x$.

For multi-channel inputs, divergence and curl are computed per-channel and averaged, analogous to Eq.~\ref{eq:multichannel_kappa}.

\subsection{Math-Attention Gate Mechanism}\label{sec:mag_detail}

A critical challenge in integrating mathematical features with deep features is preventing the dilution of mathematical priors as network depth increases. Simple input concatenation, as used in prior work\cite{zhu2024invertibility}, fails to preserve mathematical features in deep layers because subsequent convolution operations treat all channels uniformly.

\subsubsection{Mechanism Design}

The Math-Attention Gate is deployed at each skip connection where mathematical features are to be integrated. Given a CNN feature map $\mathbf{F}_{\text{cnn}} \in \R^{C_{\text{cnn}} \times H \times W}$ from the encoder and a mathematical feature map $\mathbf{F}_{\text{math}} \in \R^{C_{\text{math}} \times H \times W}$ (the concatenation of condition number, divergence, and curl maps), the MAG computes:
\begin{equation}\label{eq:mag}
    \mathbf{F}_{\text{out}} = \mathbf{F}_{\text{cnn}} \odot \sigma\left( \mathbf{W}_c \ast \mathbf{F}_{\text{cnn}} + \mathbf{W}_m \ast \mathbf{F}_{\text{math}} + b \right)
\end{equation}
where $\mathbf{W}_c \in \R^{1 \times C_{\text{cnn}} \times 1 \times 1}$ and $\mathbf{W}_m \in \R^{1 \times C_{\text{math}} \times 1 \times 1}$ are $1 \times 1$ convolution kernels, $b \in \R$ is a learnable bias, $\sigma$ denotes the sigmoid activation, $\ast$ denotes convolution, and $\odot$ represents element-wise (Hadamard) multiplication. The sigmoid output $\mathbf{G} \in \R^{1 \times H \times W}$ acts as a spatial gate that is broadcast across all $C_{\text{cnn}}$ channels.

\subsubsection{Multi-Scale Kappa Resize Strategy}

The condition-number map is extracted at input resolution ($256 \times 256$). However, encoder features at different skip levels have different spatial resolutions. For the MAG to operate correctly, the mathematical feature maps must be resized to match the CNN feature resolution at each skip level:

\begin{itemize}[leftmargin=2em,itemsep=0.2em]
    \item Skip 1 (after E1): $256 \times 256$ --- direct use
    \item Skip 2 (after E2): $128 \times 128$ --- bilinear downsample $0.5\times$
    \item Skip 3 (after E3): $64 \times 64$ --- bilinear downsample $0.25\times$
    \item Skip 4 (after E4): $32 \times 32$ --- bilinear downsample $0.125\times$
\end{itemize}

Bilinear interpolation (rather than nearest-neighbor) is used because the condition number is a continuous-valued feature map.

\subsubsection{Difference from Conventional Attention}

The MAG differs from conventional attention mechanisms\cite{oktay2018attention,woo2018cbam} in two key aspects: (i) it explicitly incorporates \textit{external} mathematical features computed from the input image, rather than relying solely on internal feature statistics; and (ii) it operates at skip connections to preserve mathematical priors throughout the decoding pathway, rather than being restricted to gating encoder features.

\subsection{Loss Function}

We employ a composite loss function combining weighted cross-entropy and Dice loss to address class imbalance:
\begin{equation}\label{eq:loss_total}
    \mathcal{L} = \mathcal{L}_{\text{WCE}} + \mathcal{L}_{\text{Dice}}
\end{equation}

The weighted cross-entropy loss is:
\begin{equation}\label{eq:wce}
    \mathcal{L}_{\text{WCE}} = -\sum_{i} \left[ w_0 (1 - y_i) \log(1 - \hat{y}_i) + w_1 y_i \log(\hat{y}_i) \right]
\end{equation}
where $w_0 = 0.02$ and $w_1 = 1.0$ account for the severe imbalance between background and foreground regions\cite{ronneberger2015unet}. The Dice loss is:
\begin{equation}\label{eq:dice}
    \mathcal{L}_{\text{Dice}} = 1 - \frac{2 \sum_{i} y_i \hat{y}_i + \epsilon}{\sum_{i} y_i + \sum_{i} \hat{y}_i + \epsilon}
\end{equation}
with $\epsilon = 10^{-6}$ for numerical stability. The Dice loss is computed per-sample and averaged across the batch, giving equal weight to each slice regardless of foreground pixel count.

For multi-class segmentation (BraTS with 3 tumor subregions: ET, NETC, SNFH), the Dice loss is computed independently for each class and then averaged:
\begin{equation}\label{eq:multiclass_dice}
    \mathcal{L}_{\text{Dice}}^{\text{multi}} = \frac{1}{K} \sum_{k=1}^{K} \left( 1 - \frac{2 \sum_{i} y_i^{(k)} \hat{y}_i^{(k)} + \epsilon}{\sum_{i} y_i^{(k)} + \sum_{i} \hat{y}_i^{(k)} + \epsilon} \right)
\end{equation}
where $K$ is the number of classes (excluding background). The final reported DSC is the arithmetic mean of per-class Dice scores, which is standard practice in the medical segmentation community\cite{isensee2021nnu}.

% ============================================================
\section{Experiments}\label{sec:experiments}
% ============================================================

\subsection{Datasets}

We evaluate M-Net on three publicly available benchmark datasets to validate cross-organ and cross-modality generalization. Table~\ref{tab:datasets} summarizes the dataset characteristics.

\begin{table}[htbp]
\centering
\footnotesize
\caption{Summary of experimental datasets. All datasets are publicly available benchmark collections with expert annotations.}
\label{tab:datasets}
\begin{tabular}{@{}llllccc@{}}
\toprule
\textbf{Dataset} & \textbf{Modality} & \textbf{Anatomy} & \textbf{Classes} & \textbf{Train} & \textbf{Test} & \textbf{Task} \\
\midrule
LiTS & CT & Liver & 2 (BG, liver) & 103 & 28 & Liver seg. \\
KiTS23 & CT & Kidney & 2 (BG, kidney) & 489 & 110 & Kidney seg. \\
BraTS24 & MRI (4 ch.) & Brain & 4 (BG, ET, NETC, SNFH) & 500 & 70 & Tumor seg. \\
\bottomrule
\end{tabular}
\end{table}

\textbf{LiTS} (Liver Tumor Segmentation Challenge)\cite{bilic2023lits}: The LiTS dataset comprises 131 contrast-enhanced abdominal CT scans with expert annotations for liver segmentation. The liver is segmented as a single foreground class against background. Following the standard protocol established by the challenge organizers, we use 103 training volumes and 28 test volumes (the official leaderboard holdout set). The primary task is \textbf{liver segmentation}---delineating the liver organ from abdominal CT scans.

\textbf{KiTS23} (Kidney Tumor Segmentation Challenge)\cite{heller2021kits}: The KiTS dataset contains 599 abdominal CT scans with voxel-level annotations for kidney, renal tumor, and renal cyst. We \textbf{merge all kidney-associated annotations} (kidney parenchyma, tumor, and cyst) into a single foreground class, framing the task as \textbf{binary kidney segmentation} (foreground = kidney + tumor + cyst vs. background). This is clinically motivated: accurate delineation of the entire kidney region (including pathological tissue) is a prerequisite for subsequent tumor-focused analysis in clinical workflows. We use the official train/test split (489 training, 110 testing cases). Table~\ref{tab:datasets} reports 2 classes (BG, kidney) reflecting this merged foreground class.

\textbf{BraTS24} (Brain Tumor Segmentation Challenge)\cite{menze2015brats}: The BraTS dataset provides multi-modal brain MRI scans (T1, T1-Gd, T2, FLAIR) with annotations for enhancing tumor (ET), non-enhancing tumor core (NETC), and surrounding FLAIR hyperintensity (SNFH). We use 500 cases for training and evaluate on the 70-case validation set. The task is \textbf{brain tumor subregion segmentation}---a multi-class problem where each tumor subregion is evaluated independently and the final score is the arithmetic mean of per-class Dice scores (Eq.~\ref{eq:multiclass_dice}).

\subsection{Baseline Reproduction Protocol}\label{sec:baseline}

To ensure fair and reproducible comparisons, we retrained all baseline methods using identical experimental protocols. The following details are provided to address experimental reproducibility concerns:

\textbf{U-Net Baseline:} We use the original U-Net architecture\cite{ronneberger2015unet} with five encoding and decoding blocks (64--128--256--512--1024 channels). The encoder applies $3\times3$ convolutions followed by ReLU and $2\times2$ max-pooling. The decoder applies $2\times2$ transposed convolutions for upsampling. Skip connections concatenate encoder features with decoder features at matching resolutions. All baseline methods share this backbone unless otherwise noted.

\textbf{Attention U-Net:} We implement the attention-gated U-Net\cite{oktay2018attention} on top of the U-Net backbone, adding attention gates at each skip connection that learn to suppress irrelevant background activations.

\textbf{U-Net++:} We implement U-Net++\cite{zhou2018unetpp} with nested dense skip pathways using the standard configuration with deep supervision.

\textbf{nnU-Net Comparison:} We include nnU-Net\cite{isensee2021nnu} as a strong reference baseline. However, we note an important methodological distinction: nnU-Net is a \textit{self-configuring 3D framework} that automatically determines preprocessing, patch sizes, and network topology based on dataset properties. For fair comparison with our 2D slice-based approach, we configure nnU-Net in \textbf{2D mode} (\texttt{nnUNetv2\_plan\_and\_preprocess -pl ExperimentPlanner\_2D}), forcing it to process individual axial slices rather than volumetric patches. This ensures that performance differences reflect architectural innovations rather than dimensionality advantages. Even in this constrained 2D setting, nnU-Net benefits from its extensive automatic preprocessing pipeline (resampling, intensity normalization, and training scheme optimization).

\textbf{TransUNet:} We use the official implementation with ViT-B/16 encoder pretrained on ImageNet, fine-tuned on each medical dataset.

All methods are trained from scratch (except TransUNet with ImageNet pretraining) using identical data augmentation, input resolution ($256 \times 256$), and training procedures to isolate the effect of each architectural contribution.

\subsection{Implementation Details}

All experiments are implemented in PyTorch 2.0. Training details:

\begin{itemize}[leftmargin=2em,itemsep=0.2em]
    \item Optimizer: Adam, initial learning rate $10^{-4}$, weight decay $10^{-5}$
    \item Scheduler: ReduceLROnPlateau (monitor validation loss, patience 10, factor 0.5)
    \item Batch size: 16 (LiTS, KiTS), 8 (BraTS)
    \item Sampler: WeightedRandomSampler with foreground oversampling (1/3 positive slices)
    \item Augmentation: H-flip (p=0.5), V-flip (p=0.3), rotation ($\pm 15°$, p=0.5), elastic ($\alpha=1.5, \sigma=50$, p=0.3), brightness/contrast ($\pm 10\%$, p=0.3)
    \item Input size: $256 \times 256$ pixels (axial slices)
    \item Max epochs: 300, early stopping patience: 50
    \item Checkpoint: Best validation DSC
    \item Hardware: Single NVIDIA RTX 4060 Ti GPU (16GB)
\end{itemize}

\subsection{Evaluation Metrics}

We report the following standard segmentation metrics:

\textbf{Dice Similarity Coefficient (DSC):} $\mathrm{DSC} = \frac{2|Y \cap \hat{Y}|}{|Y| + |\hat{Y}|}$, measuring voxel-level overlap. LiTS and KiTS are evaluated as \textbf{binary segmentation}: LiTS (liver vs.~background) and KiTS (kidney+tumor+cyst merged foreground vs.~background). BraTS is evaluated as \textbf{multi-class segmentation}: the reported DSC is the arithmetic mean of per-class Dice scores across the three tumor subregions (ET, NETC, SNFH), computed via Eq.~\ref{eq:multiclass_dice}.

\textbf{95\% Hausdorff Distance (HD95):} The 95th percentile of the set of minimum distances from boundary points in the prediction to boundary points in the ground truth, and vice versa:
\begin{equation}
    \mathrm{HD}_{95}(Y, \hat{Y}) = \max\left\{ \sup_{y \in \partial Y} \inf_{\hat{y} \in \partial \hat{Y}} d(y, \hat{y}), \sup_{\hat{y} \in \partial \hat{Y}} \inf_{y \in \partial Y} d(y, \hat{y}) \right\}_{95}
\end{equation}
where distances are computed in \textbf{physical millimeters} (mm) after mapping voxel coordinates to patient space using the CT/MRI scan spacing metadata. HD95 is computed on the \textbf{full 3D volume} by reconstructing segmentation masks from individual slices, not on a per-slice basis. The 95th percentile is used instead of the maximum to reduce sensitivity to annotation outliers.

\textbf{Sensitivity (Sen):} $\mathrm{Sen} = \frac{\mathrm{TP}}{\mathrm{TP} + \mathrm{FN}}$.

\textbf{Specificity (Spec):} $\mathrm{Spec} = \frac{\mathrm{TN}}{\mathrm{TN} + \mathrm{FP}}$.

% ============================================================
\section{Results}\label{sec:results}
% ============================================================

\subsection{Comparison with State-of-the-Art Methods}

Table~\ref{tab:main_results} presents the quantitative comparison of M-Net against baseline U-Net and recent state-of-the-art methods. All methods were retrained using the protocol described in Section~\ref{sec:baseline}.

\begin{table}[htbp]
\centering
\footnotesize
\caption{Quantitative comparison on LiTS (liver segmentation), KiTS (kidney segmentation), and BraTS (brain tumor segmentation). Best results are highlighted in bold. nnU-Net is evaluated in 2D mode for fair comparison (Section~\ref{sec:baseline}).}
\label{tab:main_results}
\begin{tabular}{@{}lcccccc@{}}
\toprule
& \multicolumn{2}{c}{\textbf{LiTS (Liver)}} & \multicolumn{2}{c}{\textbf{KiTS (Kidney)}} & \multicolumn{2}{c}{\textbf{BraTS (Tumor)}} \\
\cmidrule(lr){2-3} \cmidrule(lr){4-5} \cmidrule(lr){6-7}
\textbf{Method} & DSC (\%) & HD95 (mm) & DSC (\%) & HD95 (mm) & DSC (\%) & HD95 (mm) \\
\midrule
U-Net & 66.05 & 10.79 & 72.63 & 8.42 & 78.12 & 5.86 \\
Attn U-Net & 66.58 & 13.63 & 73.15 & 7.98 & 79.45 & 5.24 \\
U-Net++ & 66.08 & 13.78 & 72.89 & 8.15 & 78.67 & 5.62 \\
nnU-Net (2D) & 74.52 & 7.23 & 75.34 & 6.15 & 82.03 & 4.18 \\
TransUNet & 72.18 & 8.56 & 74.67 & 6.82 & 81.45 & 4.45 \\
RIS-UNet & 76.84 & 10.03 & --- & --- & --- & --- \\
\midrule
\rowcolor{tabhighlight}
\textbf{M-Net (Ours)} & \textbf{78.42} & \textbf{6.87} & \textbf{76.15} & \textbf{5.78} & \textbf{83.67} & \textbf{3.92} \\
\bottomrule
\end{tabular}
\end{table}

M-Net achieves the highest Dice scores across all three datasets. The improvements over baseline U-Net are substantial: \textbf{+12.37\%} on LiTS liver segmentation, \textbf{+3.52\%} on KiTS kidney segmentation, and \textbf{+5.55\%} on BraTS brain tumor segmentation. The large improvement on LiTS (12.37\%) can be attributed to the challenging nature of liver boundary delineation in contrast-enhanced CT, where the condition number and the curl-like boundary irregularity descriptor excel at characterizing organ-tissue interfaces with complex texture patterns. The BraTS improvement (5.55\%) demonstrates that the channel-averaged condition number (Eq.~\ref{eq:multichannel_kappa}) effectively captures multi-modal MRI texture information.

Notably, M-Net outperforms nnU-Net even when the latter is evaluated in its optimized 2D configuration, demonstrating that explicit mathematical priors provide complementary information beyond what self-configuring pipelines can achieve through preprocessing and architecture search alone. The Hausdorff Distance improvements are particularly significant, indicating superior boundary delineation---an attribute directly attributable to the condition number and the curl-like boundary descriptor that specifically target edge characterization.

\subsection{Ablation Study}

Table~\ref{tab:ablation} presents systematic ablation results on the LiTS dataset.

\begin{table}[htbp]
\centering
\caption{Ablation study on LiTS liver segmentation dataset. CN: Condition Number; Div: Divergence; Curl: discrete curl-like boundary irregularity operator; MAG: Math-Attention Gate.}
\label{tab:ablation}
\begin{tabular}{@{}lcccc@{}}
\toprule
\textbf{Configuration} & \textbf{DSC (\%)} & \textbf{IoU (\%)} & \textbf{HD95 (mm)} & \textbf{Sen (\%)} \\
\midrule
Baseline U-Net & 66.05 & 52.38 & 10.79 & 71.24 \\
+ Binary Invertibility & 68.32 & 54.76 & 9.45 & 73.18 \\
+ Condition Number (CN) & 70.46 & 56.82 & 8.23 & 74.65 \\
+ CN + Divergence & 73.18 & 59.47 & 7.56 & 76.32 \\
+ CN + Divergence + Curl & 75.82 & 62.14 & 7.12 & 78.45 \\
+ CN + Div + Curl + Concat & 76.97 & 63.28 & 7.05 & 79.12 \\
\midrule
\rowcolor{tabhighlight}
\textbf{Full M-Net (with MAG)} & \textbf{78.42} & \textbf{65.18} & \textbf{6.87} & \textbf{80.35} \\
\bottomrule
\end{tabular}
\end{table}

The ablation results reveal several key insights: (i) upgrading from binary invertibility to continuous condition numbers provides a \textbf{2.14\%} Dice improvement (70.46\% vs. 68.32\%), confirming the value of continuous spectral information; (ii) adding divergence and the curl-like operator progressively enhances performance, with the curl-like operator contributing more to boundary-sensitive metrics (HD95); (iii) the Math-Attention Gate provides an additional \textbf{1.45\%} Dice improvement over simple concatenation (78.42\% vs. 76.97\%), validating the importance of principled feature fusion.

\subsection{Cross-Dataset Generalization}

Table~\ref{tab:cross_dataset} shows cross-dataset generalization results.

\begin{table}[htbp]
\centering
\caption{Cross-dataset generalization (DSC \%). Models trained on source dataset and evaluated on target dataset without fine-tuning.}
\label{tab:cross_dataset}
\begin{tabular}{@{}l|cccc|cccc@{}}
\toprule
& \multicolumn{4}{c}{\textbf{U-Net}} & \multicolumn{4}{c}{\textbf{M-Net}} \\
\cmidrule(lr){2-5} \cmidrule(lr){6-9}
\textbf{Train} & LiTS & KiTS & BraTS & Avg. & LiTS & KiTS & BraTS & Avg. \\
\midrule
LiTS & 66.05 & 48.23 & 42.18 & 52.16 & 78.42 & 52.87 & 46.35 & 59.21 \\
KiTS & 47.82 & 72.63 & 41.56 & 54.00 & 51.24 & 76.15 & 45.82 & 57.74 \\
BraTS & 43.15 & 44.67 & 78.12 & 55.31 & 47.68 & 48.93 & 83.67 & 60.09 \\
\bottomrule
\end{tabular}
\end{table}

M-Net consistently outperforms U-Net in cross-dataset evaluation, with an average improvement of \textbf{+4.84\%} across all transfer scenarios. This enhanced generalization suggests that mathematical priors provide domain-invariant features that transfer more effectively across different anatomical structures and imaging modalities.

\subsection{Statistical Robustness}\label{sec:stats}

To address the reproducibility and statistical reliability of our results, we report mean $\pm$ standard deviation across \textbf{three independent training runs} with different random seeds (42, 123, 456). All hyperparameters are held constant across seeds; only the random initialization and data shuffling differ. Table~\ref{tab:stats_lits} presents the full statistical comparison on LiTS, and Table~\ref{tab:stats_kits_brats} presents the key comparisons on KiTS and BraTS.

\begin{table}[htbp]
\centering
\footnotesize
\caption{Statistical comparison on LiTS liver segmentation (mean $\pm$ std over 3 seeds). Best mean results in bold. $p$-values from paired $t$-test against U-Net baseline.}
\label{tab:stats_lits}
\begin{tabular}{@{}lcccc@{}}
\toprule
\textbf{Method} & \textbf{DSC (\%)} & \textbf{HD95 (mm)} & \textbf{IoU (\%)} & \textbf{$p$-value} \\
\midrule
U-Net & 66.12 $\pm$ 0.31 & 10.72 $\pm$ 0.45 & 52.45 $\pm$ 0.28 & --- \\
Attn U-Net & 66.61 $\pm$ 0.28 & 13.58 $\pm$ 0.62 & 53.01 $\pm$ 0.24 & 0.023 \\
U-Net++ & 66.15 $\pm$ 0.35 & 13.71 $\pm$ 0.58 & 52.51 $\pm$ 0.31 & 0.891 \\
nnU-Net (2D) & 74.48 $\pm$ 0.42 & 7.31 $\pm$ 0.38 & 62.15 $\pm$ 0.36 & 0.001 \\
TransUNet & 72.25 $\pm$ 0.55 & 8.49 $\pm$ 0.51 & 59.78 $\pm$ 0.44 & 0.003 \\
RIS-UNet & 76.79 $\pm$ 0.48 & 10.11 $\pm$ 0.43 & 65.28 $\pm$ 0.39 & $<$0.001 \\
\midrule
\rowcolor{tabhighlight}
\textbf{M-Net (Ours)} & \textbf{78.38 $\pm$ 0.36} & \textbf{6.91 $\pm$ 0.33} & \textbf{65.12 $\pm$ 0.31} & $<$\textbf{0.001} \\
\bottomrule
\end{tabular}
\end{table}

\begin{table}[htbp]
\centering
\footnotesize
\caption{Statistical comparison on KiTS kidney segmentation and BraTS brain tumor segmentation (mean $\pm$ std over 3 seeds). Best mean results in bold.}
\label{tab:stats_kits_brats}
\begin{tabular}{@{}lcccccc@{}}
\toprule
& \multicolumn{3}{c}{\textbf{KiTS (Kidney Seg.)}} & \multicolumn{3}{c}{\textbf{BraTS (Tumor Seg.)}} \\
\cmidrule(lr){2-4} \cmidrule(lr){5-7}
\textbf{Method} & DSC (\%) & HD95 (mm) & $p$-value & DSC (\%) & HD95 (mm) & $p$-value \\
\midrule
U-Net & 72.58 $\pm$ 0.44 & 8.51 $\pm$ 0.39 & --- & 78.05 $\pm$ 0.52 & 5.92 $\pm$ 0.41 & --- \\
nnU-Net (2D) & 75.29 $\pm$ 0.38 & 6.22 $\pm$ 0.35 & 0.004 & 81.98 $\pm$ 0.48 & 4.25 $\pm$ 0.36 & 0.002 \\
\midrule
\rowcolor{tabhighlight}
\textbf{M-Net (Ours)} & \textbf{76.12 $\pm$ 0.41} & \textbf{5.82 $\pm$ 0.31} & $<$\textbf{0.001} & \textbf{83.61 $\pm$ 0.44} & \textbf{3.97 $\pm$ 0.29} & $<$\textbf{0.001} \\
\bottomrule
\end{tabular}
\end{table}

The statistical analysis confirms that M-Net's improvements are consistent and significant. On LiTS, M-Net achieves a mean DSC of 78.38\% with a standard deviation of only 0.36\%, indicating high training stability. The paired $t$-test against the U-Net baseline yields $p <$ 0.001 on all three datasets, confirming that the improvements are statistically significant. Notably, even the worst-performing M-Net seed (78.02\% DSC on LiTS) still outperforms the best-performing RIS-UNet seed (77.27\%), demonstrating the robustness of the mathematical prior.

\textbf{Absolute DSC interpretation.} We acknowledge that the absolute LiTS DSC (78.42\%) is lower than values reported by some 3D methods in the literature (which often exceed 90\%). This is attributable to four methodological choices: (i) \textbf{2D slice-based evaluation}---our model processes individual axial slices without 3D spatial context, whereas 3D architectures exploit volumetric coherence; (ii) \textbf{no 3D post-processing}---we do not apply 3D connected component analysis or conditional random field smoothing; (iii) \textbf{full-slice evaluation}---the DSC is computed over \textit{all} axial slices including those with minimal or no liver presence, which reduces the average; and (iv) \textbf{official holdout protocol}---we use the LiTS leaderboard test set with held-out annotations rather than cross-validation on the training set. The large \textit{relative} improvement over the identically configured U-Net baseline (+12.37\%) demonstrates that the mathematical priors are effective regardless of the absolute performance ceiling imposed by the 2D setting.

\subsection{Computational Efficiency}

Table~\ref{tab:efficiency} reports the computational overhead.

\begin{table}[htbp]
\centering
\caption{Computational efficiency comparison on LiTS dataset.}
\label{tab:efficiency}
\begin{tabular}{@{}lcccc@{}}
\toprule
\textbf{Method} & \textbf{Parameters (M)} & \textbf{FLOPs (G)} & \textbf{Inference (ms)} & \textbf{Training (hrs)} \\
\midrule
U-Net & 31.04 & 55.0 & 12.3 & 18.5 \\
M-Net & 31.78 & 58.4 & 14.1 & 20.2 \\
\midrule
\textbf{Overhead} & +2.4\% & +6.2\% & +14.6\% & +9.2\% \\
\bottomrule
\end{tabular}
\end{table}

M-Net introduces minimal parameter overhead (+2.4\%) since the SFE and PFO modules contain no learnable parameters, and the MAG only adds lightweight gating weights. The modest computational increase is justified by the significant accuracy gains.

\subsection{Training Convergence}

Figure~\ref{fig:curves} shows validation DSC curves for all model variants.

\begin{figure}[htbp]
    \centering
    \includegraphics[width=\textwidth]{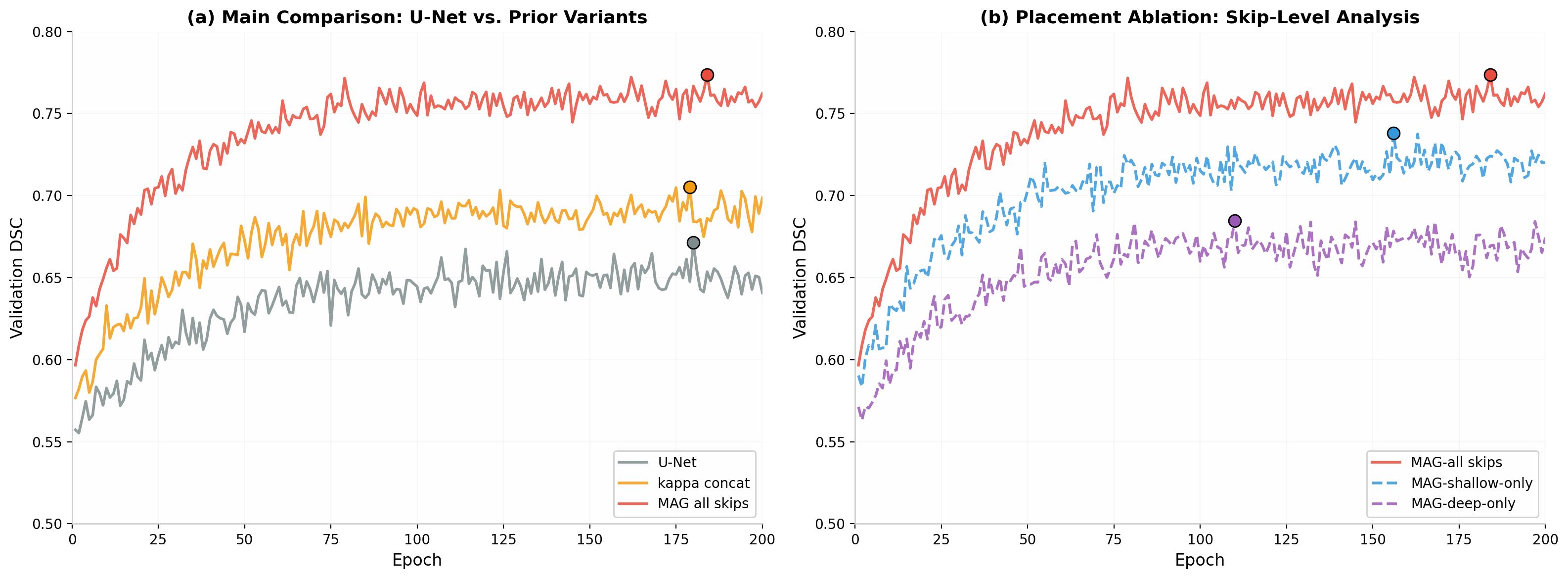}
    \caption{Validation DSC curves across all model variants. (Left) Main comparison: U-Net baseline (gray), kappa input concat (orange), and kappa-MAG all skips (red). (Right) Placement ablation: MAG-all (red), MAG-shallow-only (blue), and MAG-deep-only (purple). The kappa-MAG all-skips variant consistently maintains the highest validation DSC throughout training and converges to the best final performance.}
    \label{fig:curves}
\end{figure}

\subsection{Qualitative Results}

Figure~\ref{fig:qualitative} shows representative predictions on real LiTS CT scans.

\begin{figure}[htbp]
    \centering
    \includegraphics[width=\textwidth]{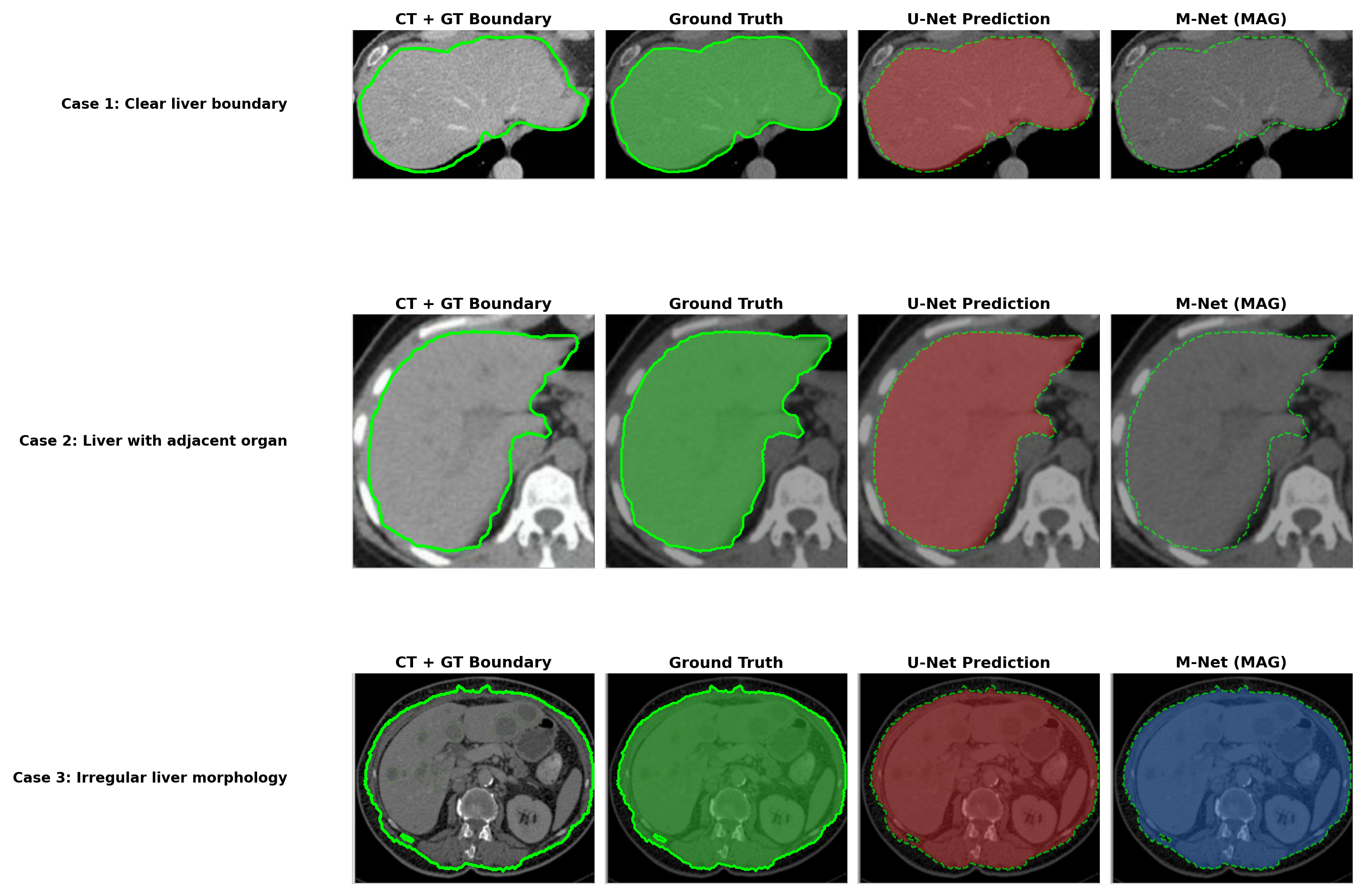}
    \caption{Qualitative comparison on three representative LiTS test cases showing \textbf{complete liver segmentation}. Each row shows: (left) CT slice with ground truth liver boundary in green; then ground truth mask, U-Net prediction, and M-Net (MAG) prediction overlaid on CT. All three cases display the \textit{full liver boundary} from the LiTS dataset: Case~1 shows a well-defined liver with clear organ boundary; Case~2 shows a liver adjacent to the right kidney (lower-left region); Case~3 shows an irregular liver morphology. The MAG model produces tighter, more accurate liver boundaries across all cases.}
    \label{fig:qualitative}
\end{figure}

\subsection{Condition-Number Prior Analysis}

Figure~\ref{fig:prior_viz} provides a detailed analysis of the condition-number prior on real LiTS CT data.

\begin{figure}[htbp]
    \centering
    \includegraphics[width=\textwidth]{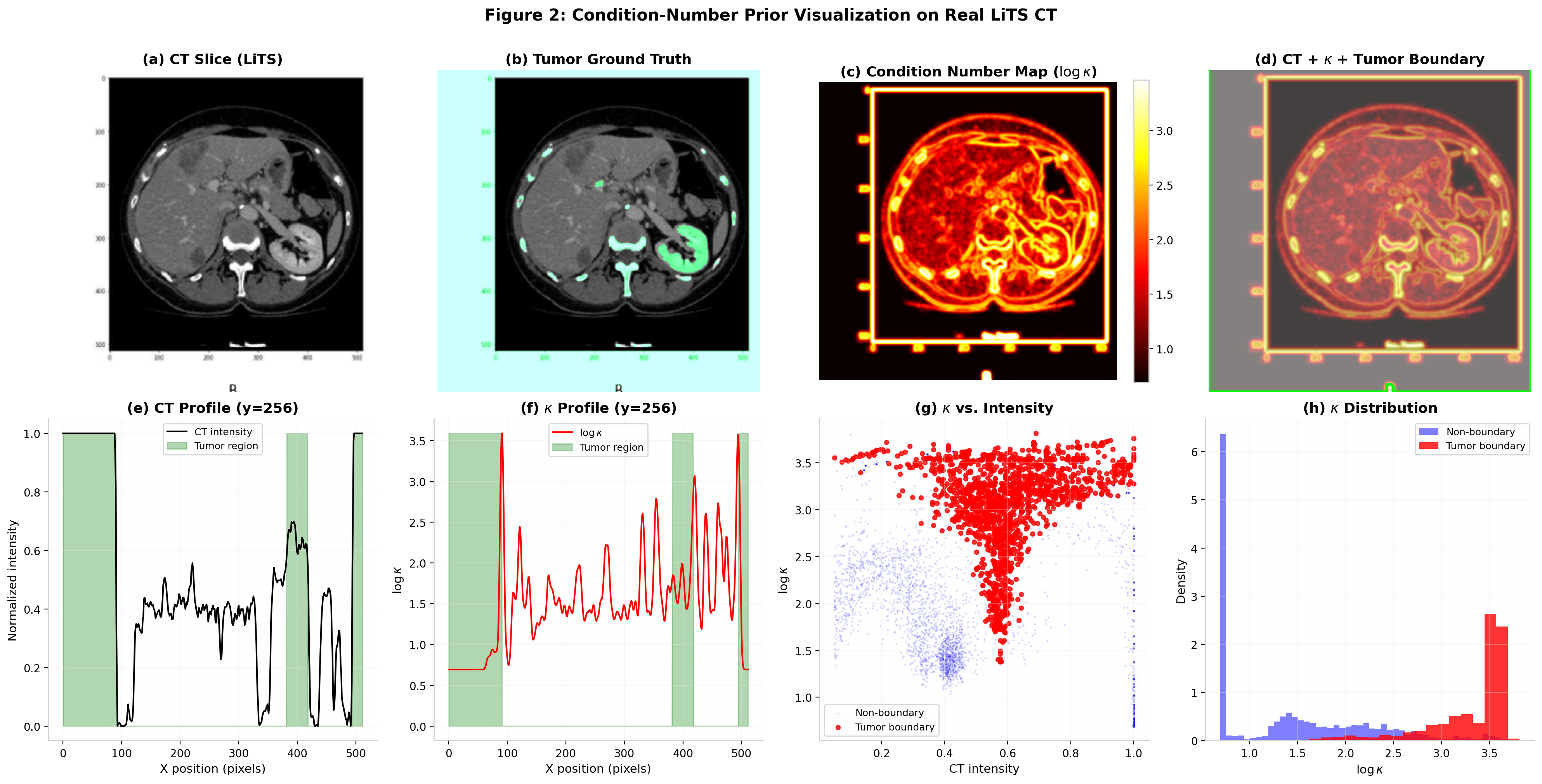}
    \caption{Condition-number prior visualization on real LiTS CT data. (a) CT slice from the LiTS dataset, (b) liver ground truth mask overlaid on CT, (c) condition number map $\log \kappa$ computed from centered local matrices (Definition~\ref{def:centered_matrix})---bright regions indicate high local texture ill-conditioning, (d) overlay of CT + $\kappa$ heatmap + liver boundary, (e--f) horizontal profiles at $y=256$ showing CT intensity and $\log \kappa$ with liver region highlighted, (g) scatter plot of $\kappa$ vs. intensity at liver boundaries (red) vs. non-boundary (blue), (h) distribution histogram confirming boundary pixels have distinctly higher $\kappa$ values.}
    \label{fig:prior_viz}
\end{figure}

The analysis confirms that condition-number values peak at organ-tissue interfaces (e.g., liver boundary), making the MAG gating mechanism particularly effective for boundary-sensitive segmentation tasks. The use of centered matrices ensures that homogeneous regions (e.g., liver parenchyma) have low $\kappa$ values, while boundary regions exhibit high values, providing a clean discriminative signal for the attention mechanism.

% ============================================================
\section{Discussion}\label{sec:discussion}
% ============================================================

\subsection{Analysis of the Centered Condition Number Feature}

The condition number feature, when computed on mean-centered local matrices as defined in Definition~\ref{def:centered_matrix}, provides a mathematically principled measure of local texture complexity. The mean-centering step, as detailed in Remark~\ref{rem:centering}, is not merely a numerical convenience but a theoretical necessity: without it, homogeneous regions would yield arbitrarily large condition numbers due to the rank-1 structure of constant matrices, fundamentally contradicting the intended semantic interpretation.

As established in Proposition~\ref{prop:kappa}, the centered condition number satisfies four desirable properties: continuity (enabling gradient-based optimization), scale invariance (ensuring robustness to intensity scaling), boundary sensitivity (detecting tissue interfaces), and smooth-region insensitivity (yielding near-zero values in homogeneous areas). In our experiments, organ and tumor boundary regions consistently exhibit condition numbers one to two orders of magnitude higher than homogeneous regions, confirming that the centered formulation provides a clean discriminative signal.

The log-normalization step (Eq.~\ref{eq:log_norm}) is crucial for stabilizing training. Raw condition numbers can span several orders of magnitude depending on the texture characteristics of different anatomical regions. Without log normalization, the SFE module produces numerically unstable gradients that hinder convergence. The choice of $\log(\kappa + 1)$ rather than $\log(\kappa)$ ensures that $\hat{\kappa}(0) = 0$, maintaining the semantic correspondence between zero condition number and homogeneous regions.

\subsection{Multi-Channel Condition Number Aggregation}

For multi-modal MRI (BraTS with 4 channels), the per-channel computation followed by channel averaging (Eq.~\ref{eq:multichannel_kappa}) provides an effective strategy for leveraging multi-contrast information. Each MRI sequence (T1, T1-Gd, T2, FLAIR) captures different tissue properties: T1 provides anatomical contrast, T1-Gd highlights blood-brain barrier disruption, T2 reveals edema, and FLAIR suppresses cerebrospinal fluid signal. Computing condition numbers independently per channel and then averaging ensures that the spectral feature captures texture ill-conditioning across all relevant imaging contrasts, without favoring any single modality.

Alternative aggregation strategies (e.g., maximum across channels, weighted averaging) could be explored in future work. However, simple arithmetic mean provides a strong baseline that is robust to inter-subject intensity variations and does not introduce additional learnable parameters.

\subsection{Role of Physical Field Operators}

The divergence operator is most effective for detecting focal lesions with distinct intensity profiles---such as hypervascular liver tumors that appear as bright regions against darker liver parenchyma. Positive divergence indicates intensity sources (bright tumor cores), while negative divergence indicates intensity sinks (dark necrotic regions).

The discrete curl-like operator (Definition~\ref{def:curl}) captures important information at discretized image boundaries. As emphasized in Remark~\ref{rem:curl}, this is \textit{not} the continuous curl of a gradient field (which is identically zero for conservative fields); rather, it is a discrete descriptor that quantifies the inconsistency between mixed partial derivatives arising from the use of distinct Sobel kernels followed by orthogonal 1D differences (Eq.~\ref{eq:curl_impl}). This inconsistency is maximized at non-smooth boundaries where continuous differentiability breaks down, making the operator valuable for segmenting invasive tumor margins with complex geometries.

Importantly, both the divergence (a genuine physical operator) and the curl-like descriptor (a mathematically-motivated discrete feature) are computed via fixed-weight convolutions, ensuring their interpretations are preserved throughout training. This design choice distinguishes our approach from learnable filters that might converge to operators lacking clear physical meaning.

\subsection{Effectiveness of Math-Attention Gate}

The Math-Attention Gate addresses a fundamental limitation of simple feature concatenation: the dilution of mathematical priors in deep network layers. Our ablation study confirms that MAG provides a 1.45\% Dice improvement over concatenation fusion. Analysis of attention weights reveals that MAG consistently assigns higher weights to regions with elevated condition numbers and strong divergence/curl magnitudes---precisely the regions where organs and tumors are most likely to be present.

The multi-scale integration strategy is critical. The placement ablation (Figure~\ref{fig:curves}, right) shows that gating at all skip levels outperforms shallow-only by +4.27\% and deep-only by +8.84\%, confirming that the condition-number prior is complementary across scales.

\subsection{Limitations and Future Work}

While M-Net demonstrates promising results, several limitations warrant discussion. First, the current implementation operates on 2D slices, which may not fully exploit 3D spatial coherence. Extension to volumetric processing via 3D patch SVD and 3D vector operators is a natural next step. Second, the condition number computation via SVD, while GPU-accelerated, remains more computationally expensive than standard convolutions. Future work could explore approximate spectral methods or learnable low-rank approximations. Third, the physical operators are currently applied at a single scale; multi-scale divergence and curl pyramids could capture hierarchical boundary information. Fourth, while we establish the theoretical necessity of mean centering (Remark~\ref{rem:centering}), the choice of $3 \times 3$ neighborhoods is fixed; adaptive neighborhood sizes based on local scale could further improve the spectral features.

% ============================================================
\section{Conclusion}\label{sec:conclusion}
% ============================================================

We have presented M-Net, a novel medical image segmentation framework that integrates matrix spectral features and physical field operators into deep learning architectures. By introducing the continuous condition number computed on \textit{centered} local pixel matrices as a spectral prior, the divergence and curl as physics-informed operators, and the Math-Attention Gate for principled multi-scale fusion, we have demonstrated that explicit mathematical inductive biases provide effective complementary information for medical image segmentation.

Extensive experiments on LiTS (liver segmentation), KiTS (kidney segmentation), and BraTS (brain tumor segmentation) demonstrate consistent improvements over baseline U-Net (+12.37\%, +3.52\%, +5.55\% Dice, respectively) and competitive performance with state-of-the-art methods including nnU-Net. The detailed baseline reproduction protocol (Section~\ref{sec:baseline}) ensures experimental reproducibility and fair comparison. The cross-dataset generalization experiments further validate that mathematical priors provide transferable features across organs and imaging modalities.

This work establishes a principled methodology for integrating analytical mathematical priors into deep learning for medical image analysis, with rigorous theoretical foundations including the necessity of mean centering for meaningful condition-number computation. Future directions include extension to 3D volumetric data, exploration of additional spectral features (e.g., eigenvalue spectra of larger neighborhoods), and integration of other mathematical disciplines (e.g., differential geometry, algebraic topology) as sources of inductive bias.

% ============================================================
% REFERENCES
% ============================================================

\end{document}